%% file: main.tex
\documentclass[runningheads]{llncs}
\usepackage[T1]{fontenc}

\usepackage{times}
\usepackage{latexsym}
\usepackage{amsmath}
\usepackage{amssymb}
\usepackage{xcolor}
\usepackage{colortbl}

\usepackage{lineno}
\usepackage{hyperref} 

\usepackage{algorithm} 
\usepackage[algo2e]{algorithm2e} 
\usepackage{pifont,dsfont}
\usepackage{makecell}
\usepackage{amsfonts}
\usepackage{wrapfig,lipsum,booktabs}
\usepackage[export]{adjustbox}
\usepackage{xcolor,colortbl}

\usepackage{graphicx}
\usepackage{subfigure}
\usepackage{paralist}
\usepackage{refcount}
\usepackage{amsfonts}
\usepackage{caption}
\usepackage{makecell} 

\usepackage{algorithm}
\usepackage{algpseudocode}

\newcommand{\myModel}{\textsc{HyQNet}}

\title{
Neural-Symbolic Logic Query Answering in Non-Euclidean Space
}

\titlerunning{Neural-Symbolic Logic Query Answering in Non-Euclidean Space}

\author{
Lihui Liu\inst{1}
}

\authorrunning{Lihui Liu.}

\institute{
Wayne State University, Detroit, Michigan, USA \\
\email{\{hw6926\}@wayne.edu} \\
}

\begin{document}

\maketitle

\begin{abstract}
\input{000abs.tex}

\vspace{-0.7\baselineskip}
\end{abstract}


\section{Introduction}
\input{001intro.tex}

\section{Problem Definition}
\input{003problem}


\section{Methodology}\label{single}
\input{004method}

\section{Experiment}

\input{005experiment}

\section{Related Work}\label{preliminary}
\input{002related_work}

\section{Conclusion}\label{preliminary}

\input{008conclusion}

\bibliographystyle{splncs04}
\bibliography{008reference,liu}


\end{document}

%% file: 000abs.tex
Answering complex first-order logic (FOL) queries on knowledge graphs is essential for reasoning. Symbolic methods offer interpretability but struggle with incomplete graphs, while neural approaches generalize better but lack transparency. Neural-symbolic models aim to integrate both strengths but often fail to capture the hierarchical structure of logical queries, limiting their effectiveness.
We propose \myModel, a neural-symbolic model for logic query reasoning that fully leverages hyperbolic space. \myModel\ decomposes FOL queries into relation projections and logical operations over fuzzy sets, enhancing interpretability. To address missing links, it employs a hyperbolic GNN-based approach for knowledge graph completion in hyperbolic space, effectively embedding the recursive query tree while preserving structural dependencies.
By utilizing hyperbolic representations, \myModel\ captures the hierarchical nature of logical projection reasoning more effectively than Euclidean-based approaches. Experiments on three benchmark datasets demonstrate that \myModel\ achieves strong performance, highlighting the advantages of reasoning in hyperbolic space.

%% file: 001intro.tex
A knowledge graph organizes entities and their relationships, making it a valuable tool for applications like question answering ~\cite{gpt2}, recommendation systems~\cite{guo2020surveyknowledgegraphbasedrecommender}, and computer vision\cite{he2015deepresiduallearningimage}. Reasoning over knowledge graphs helps infer new knowledge and answer queries based on existing data, with answering complex First-Order Logic (FOL) queries being a key challenge.  
FOL queries involve logical operations such as existential quantifiers (\( \exists \)), conjunction (\( \wedge \)), disjunction (\( \vee \)), and negation (\( \neg \)). For example, the question ``Which universities do Turing Award winners in deep learning work at?" can be formulated as an FOL query, as shown in Figure.~\ref{fig:example}.  

Most existing methods rely on neural networks to model logical operations and learn query embeddings to find answers. Approaches like \cite{GQE,query2box,betaE,fuzzyQE} represent queries using boxes, beta distributions, or points, but their reasoning process is often opaque since neural networks function as black boxes. On the other hand, symbolic methods provide clear interpretability by explicitly deriving answers from stored facts or using subgraph matching. However, they struggle with incomplete knowledge graphs, which limits their applicability in real-world scenarios.

Recent research has explored combining neural reasoning with symbolic techniques to achieve both strong generalization and interpretability. For example, GNN-QE \cite{gnnqe} employs a graph neural network (GNN) for projection operations and integrates symbolic graph matching to enhance robustness and interpretability. However, GNN-QE encodes local subtrees in Euclidean space, which is suboptimal for capturing hierarchical relationships. Hyperbolic embeddings provide a more natural representation for such structures.  

In this paper, we propose a hyperbolic GNN for answering complex First-Order Logic (FOL) queries on knowledge graphs. Unlike Euclidean-based approaches, our model leverages hyperbolic embeddings with learned curvature at each layer, enabling it to more effectively capture hierarchical query structures. Building upon prior work \cite{gnnqe}, we decompose FOL queries into expressions over fuzzy sets, where relation projections are modeled using the hyperbolic GNN with learned curvature. This design improves reasoning accuracy.
By operating in hyperbolic space, our model naturally preserves the tree-like structure of logical projections, facilitating more effective representation learning. Additionally, the learned curvature allows the GNN to adapt to the geometry of the knowledge graph, improving entity projections while maintaining interpretability through fuzzy set-based logic operations.
We evaluate our method on three standard datasets for FOL queries. Our experiments demonstrate that our approach consistently outperforms existing baselines across various query types, achieving state-of-the-art performance on all datasets.

In summary, we make the following contributions:  
\begin{itemize}  
    \item We propose a novel method that leverages hyperbolic embeddings with learned curvature to better capture hierarchical structures in logical projections. By learning curvature at each layer, our model gains increased flexibility, leading to improved reasoning performance. 
    \item We demonstrate that our method achieves state-of-the-art performance on three standard FOL query datasets.  
\end{itemize}

%% file: 003problem.tex
In this section, we introduce the background knowledge of FOL queries on knowledge graphs and fuzzy sets.

\subsection{First-Order Logic Queries on Knowledge Graphs}  
Given a set of entities $V$ and a set of relations $R$, a knowledge graph $G = (V, E, R)$ is a collection of triplets $E = \{(h_i, r_i, t_i)\} \subseteq V \times R \times V$, where each triplet is a fact from head entity $h_i$ to tail entity $t_i$ with the relation type $r_i$.  

A FOL query on a knowledge graph is a formula composed of constants (denoted with English terms), variables (denoted with $a$, $b$, $c$), relation symbols (denoted with $R(a, b)$), and logic symbols ($\exists$, $\land$, $\lor$, $\lnot$). In the context of knowledge graphs, each constant or variable is an entity in $V$. A variable is \textit{bounded} if it is quantified in the expression, and \textit{free} otherwise. Each relation symbol $R(a, b)$ is a binary function that indicates whether there is a relation $R$ between a pair of constants or variables. For logic symbols, we consider queries that contain conjunction ($\land$), disjunction ($\lor$), negation ($\lnot$), and existential quantification ($\exists$)\footnote{For simplicity, we focus on this subset of FOL operations in this work.}.  

Figure~\ref{fig:example} illustrates the FOL query for the natural language question “where did Canadian citizens with Turing Award graduate?”. Given a FOL query, the goal is to find answers to the free variables, such that the formula is true.  

\begin{figure}[hbt!]
	\centering
	\includegraphics[width=0.7\textwidth]{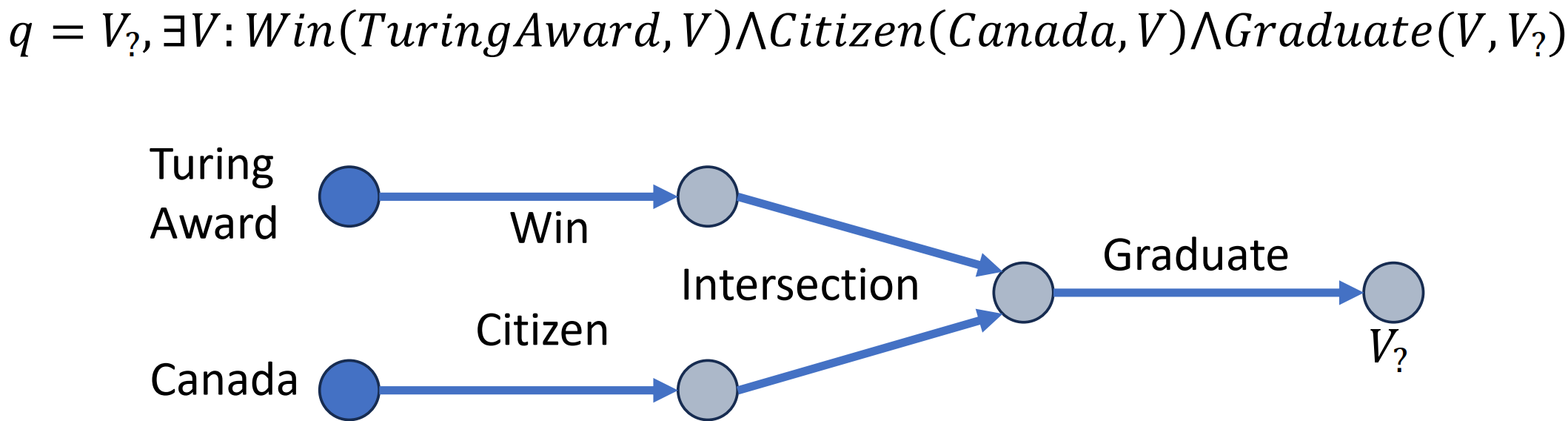}
	\caption{An example of logical query.}
    \label{fig:example}
\end{figure}

\subsection{Fuzzy Logic Operations}  
Fuzzy sets \cite{fuzzyset} are a continuous relaxation of sets whose elements have degrees of membership. A fuzzy set $A = (U, x)$ contains a universal set $U$ and a membership function $x : U \to [0, 1]$. For each $u \in U$, the value of $x(u)$ defines the degree of membership (i.e., probability) for $u$ in $A$.  
Similar to Boolean logic, fuzzy logic defines three logic operations: \textit{AND}, \textit{OR}, and \textit{NOT}, over the real-valued degree of membership. There are several alternative definitions for these operations, such as product fuzzy logic, Gödel fuzzy logic, and Łukasiewicz fuzzy logic.  

In this paper, fuzzy sets are used to represent the assignments of variables in FOL queries, where the universe $U$ is always the set of entities $V$ in the knowledge graph. Since the universe is a finite set, we represent the membership function $x$ as a vector $x$. We use $x_u$ to denote the degree of membership for element $u$. For simplicity, we abbreviate a fuzzy set $A = (U, x)$ as $x$ throughout the paper.

\subsection{Hyperbolic Space and Its Embedding}  

Hyperbolic space is a geometric setting where the parallel postulate of Euclidean geometry does not hold. Instead of parallel lines remaining equidistant, they diverge exponentially. This space, often denoted as $H^n$, exhibits a constant negative curvature, leading to unique geometric properties such as rapid expansion of volume with distance and altered notions of angles and distances. These properties make hyperbolic space particularly useful for representing hierarchical structures and relational data, where entities grow exponentially as one moves outward.  

\textbf{Mathematical Representation via the Poincaré Model.}  
One of the most practical ways to work with hyperbolic space is through the Poincaré ball model, which maps it onto the interior of a unit sphere in $\mathbb{R}^n$. Given a point $x$ within this unit ball, distances are not measured using standard Euclidean norms but rather through a modified metric that accounts for hyperbolic curvature. Specifically, the hyperbolic distance $d(x, y)$ between two points $x, y$ is computed as:  
\[
d(x, y) = \text{arccosh} \left(1 + 2 \frac{\|x - y\|^2}{(1 - \|x\|^2)(1 - \|y\|^2)} \right).
\]  
This metric ensures that distances grow exponentially as points move outward, mirroring the behavior of many complex networks and hierarchical systems. 


\textbf{Tangent Space and Exponential Map.}  
Although hyperbolic space is fundamentally different from Euclidean space, computations can be simplified by leveraging the tangent space at a given point. The tangent space at a point $x \in D^n$ is a local Euclidean approximation of hyperbolic space, allowing for operations such as optimization and vector transformations. The exponential map $\exp_x: T_xH^n \to H^n$ projects vectors from the tangent space back onto the manifold, while its inverse, the logarithmic map $\log_x: H^n \to T_xH^n$, maps points from the manifold to the tangent space:  
\[
\exp_x(v) = x + \tanh(\|v\|) \frac{v}{\|v\|}
\]
\[
\log_x(y) = \frac{\text{arctanh}(\|y - x\|)}{\|y - x\|} (y - x).
\]  


%% file: 004method.tex
\begin{figure*}[ht]
    \centering
    \includegraphics[width=0.99\textwidth]{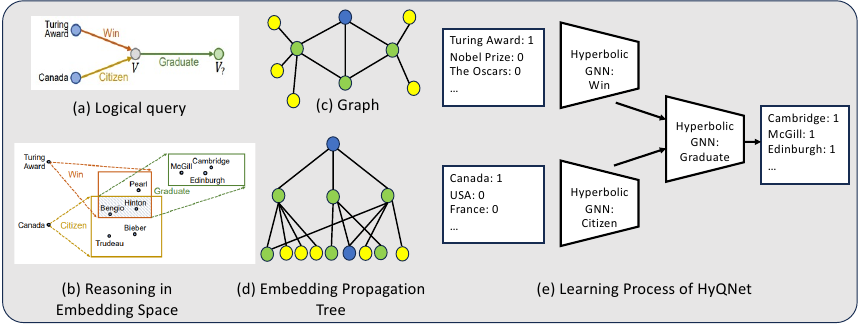}
    \caption{Overview of \myModel.}
    \label{fig:framework}
\end{figure*}

Here we present our model \myModel. The high-level idea of \myModel\ is to first decompose a FOL query into an expression of four basic operations (relation projection, conjunction, disjunction, and negation) over fuzzy sets, 
then parameterize the relation projection with a hyperbolic GNN with learned curvature adapted from KG completion, 
and instantiate the logic operations with product fuzzy logic operations. 

Given a FOL query, the first step is to transform it into a structured computation process, where the query is broken down into a sequence of fundamental operations. We represent the query as a directed computation graph, starting from anchor entities and progressively applying transformations until the final answer set is obtained.  
The decomposition follows a structured process. First, relation projections identify the entities that satisfy specific relational constraints. Given an initial fuzzy set of head entities, the projection operation maps them to potential tail entities through the corresponding relation. 
Once relation projections establish the core structure, logical operations refine the intermediate results. Conjunction enforces conditions that must be met simultaneously, effectively taking the intersection of fuzzy sets. Disjunction broadens the set of possible answers by considering alternative paths, while negation eliminates entities that contradict the query constraints.  
By iteratively applying these operations, we propagate information through the computation graph, ultimately producing a fuzzy set representing the final answer. This structured execution ensures interpretability at each step while maintaining the flexibility needed for reasoning in knowledge graphs.  
We introduce the details of each logical operation below. 

\textbf{4.2. Hyperbolic Neural Relation Projection}  
To solve complex queries on incomplete knowledge graphs, we train a neural model to perform relation projection, defined as \( y = P_q(x) \), where \( x \) is the input fuzzy set, \( q \) is the given relation, and \( y \) is the output fuzzy set.  
The goal of relation projection is to generate the potential answer set \( y \), consisting of entities that could be connected to \( x \) via relation \( q \). This task aligns with the knowledge graph completion problem, where missing links must be inferred.  
Specifically, the neural relation projection model aims to predict the fuzzy set of tail entities \( y \) given the fuzzy set of head entities \( x \) and the relation \( q \), even in the presence of incomplete knowledge.

Many methods have been proposed to model projection operations, including~\cite{GQE,query2box,betaE}. Recently, \cite{gnnqe} introduced a GNN-based framework for knowledge graph completion.  
When applying GNN models, it has been shown that their underlying mechanism is equivalent to learning node embeddings based on a recursive tree structure. An example is illustrated in Figure~\ref{fig:framework} (d), where the goal is to predict a suitable answer for the query \((\text{Blue node}, r, ?)\), where \( r \) represents an arbitrary relation. However, the corresponding recursive tree structure grows exponentially with the number of propagation steps.  
Although naively applying GNNs as in~\cite{gnnqe} is feasible, it is suboptimal because standard GNNs struggle to effectively capture the exponential growth of node neighborhoods. In contrast, hyperbolic embeddings are better suited for encoding hierarchical structures, as they naturally capture the exponential growth of node neighborhoods.  
To address this, we propose leveraging hyperbolic embeddings with learned curvature for relation projection. Inspired by this idea, we develop a scalable hyperbolic GNN framework to enhance the effectiveness of relation projection in knowledge graph completion.

\noindent\textbf{Hyperbolic Relation Projection.} Our goal is to design a hyperbolic GNN model that predicts a fuzzy set of tail entities given a fuzzy set of head entities and a relation. 
Given a head entity $u$ and a projection relation $q$, we use the following iteration to compute a representation $h_v$ for each entity $v \in V$ with respect to the source entity $u$ according to the recursive tree structure.
Given the source entity set $x_v$, we initalize its embedding by $h^{(0)}_v \leftarrow x_v q$,
where the $x_v$ is the probability of entity $v$ in $x$. 
For each layer in the tree, the learned process can be denoted as 
\begin{align*}
h^{(t)}_v &\leftarrow \text{AGGREGATE}( \{ \text{MESSAGE}(h^{(t-1)}_z (z, r, v)) \mid (z, r, v) \in E(v) \} )
\end{align*}

When utilizing hyperbolic embeddings, we iteratively update node embeddings in the knowledge graph. The update process is given by:  
\[
\begin{aligned}
h^{(t+1)}_v &= \sigma \left( \exp_c^0 \left( \tilde{A} \, \mathbf{W}_t \log_c^0 (h^{(t)}_z) \right) \right), 
&\quad (z, r, v) \in E(v).
\end{aligned}
\]
Here, the exponential map \(\exp_c^x\) maps a tangent vector \( v \in T_x D^n_c \) to the hyperbolic manifold:  
\[
\exp_c^x(v) = x \oplus_c \tanh\left(\sqrt{c} \frac{\|v\|}{\sqrt{c}}\right) \frac{v}{\|v\|}.
\]
Conversely, the logarithmic map \(\log_c^x\) transforms a point on the hyperbolic manifold back to the tangent space:  
\[
\log_c^x(y) = \frac{2 \sqrt{c} \, \text{arctanh}(\sqrt{c} \| -x \oplus_c y \|)}{\| -x \oplus_c y \|} (-x \oplus_c y).
\]
In practice, \(\exp_c^0\) and \(\log_c^0\) are used to efficiently convert between Euclidean vectors and their hyperbolic representations in the Poincaré ball model.

To apply the hyperbolic GNN framework for relation projection, we propagate the representations for $T$ layers. Then, we take the representations in the last layer and pass them into a multi-layer perceptron (MLP) $f$ followed by a sigmoid function $\sigma$ to predict the fuzzy set of tail entities: $P_q(x) = \sigma\left(f(h^{(T)}\right))$.

\subsection{Fuzzy Logic Operations}
In knowledge graph reasoning, operations like \( C(x, y) \), \( D(x, y) \), and \( N(x) \) play an essential role in combining the results of multiple relation projections, thus forming the input fuzzy set for the next projection. These operations must ideally satisfy foundational logical principles, such as commutativity, associativity, and non-contradiction. While many prior approaches \cite{GQE,query2box,betaE} have introduced geometric operations to model these logic operations within the embedding space, such neural operators are not always reliable in adhering to logical laws. As a result, chaining these operators can introduce errors.

Building on the work of \cite{gnnqe}, we employ product fuzzy logic operations to model conjunction, disjunction, and negation. Specifically, given two fuzzy sets \( x, y \in [0, 1]^V \), we define the operations as:

\[
    C(x, y) = x \odot y \tag{Conjunction}
\]
\[
    D(x, y) = x + y - x \odot y \tag{Disjunction}
\]
\[
    N(x) = 1 - x \tag{Negation}
\]

In these equations, \( \odot \) denotes element-wise multiplication, and \( 1 \) is a vector of ones (representing the universe). 

\subsection{Model Training}

In terms of model training, we follow the approach used in previous studies \cite{query2box,betaE}, aiming to minimize the binary cross-entropy loss function, defined as:

\begin{align*}
L = - \frac{1}{|A_Q|} \sum_{a \in A_Q} \log p(a \mid Q) 
- \frac{1}{|V \setminus A_Q|} \sum_{a' \in V \setminus A_Q} \log \big( 1 - p(a' \mid Q) \big)
\end{align*}

Here, \( A_Q \) refers to the set of correct answers for query \( Q \), while \( V \setminus A_Q \) represents the set of entities in the knowledge graph that are not part of \( A_Q \). The terms \( p(a \mid Q) \) and \( p(a' \mid Q) \) indicate the predicted probabilities for \( a \) and \( a' \) being correct answers, respectively.

\begin{theorem}
The proposed \myModel\ consistently outperforms—or performs on par with—existing GNN-based logical query reasoning models that rely on Euclidean embeddings.
\end{theorem}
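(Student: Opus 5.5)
The plan is to read ``consistently outperforms or performs on par with'' as a dominance statement about model classes, and to prove it by \emph{reduction}: every Euclidean GNN-based reasoner of the type considered (GNN-QE \cite{gnnqe} being the representative) is realized by \myModel\ at a limiting parameter value, and both frameworks share the same outer query machinery. Hence, on every dataset and every query type, the best \myModel\ is at least as good as the best Euclidean model. I would proceed in three steps: (i) separate the query pipeline from the projection operator; (ii) show the hyperbolic projection class contains the Euclidean one; (iii) transfer the projection-level comparison to full queries and to any evaluation metric computed from the output fuzzy sets.

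For step (i), I would note that \myModel\ and the Euclidean baselines decompose a FOL query into the same computation graph. Both use identical product fuzzy logic operators $C$, $D$, $N$ and the same loss $L$. A query answer is therefore a fixed composition $\Phi(P_{q_1},\dots,P_{q_k})$ of projection operators, and $\Phi$ does not depend on the geometry. Consequently, if for every relation $q$ the Euclidean projection $P^{E}_q$ is reproduced by some hyperbolic projection $P^{H}_q$, then every query output, and hence every ranking metric (MRR, Hits@$k$), is reproduced as well.

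Step (ii) is the core. I would use the curvature-parameterized maps from the Hyperbolic Relation Projection paragraph. As $c\to 0$, we have $\tanh(\sqrt c\,\|v\|)/\sqrt c \to \|v\|$ and $\operatorname{arctanh}(\sqrt c\,\|y\|)/\sqrt c\to\|y\|$, and Möbius addition $\oplus_c$ tends to ordinary addition. Hence $\exp^0_c$ and $\log^0_c$ converge to the identity, uniformly on bounded sets. Each layer $\sigma(\exp^0_c(\tilde A\mathbf W_t\log^0_c(h)))$ therefore converges to the Euclidean message-passing layer $\sigma(\tilde A\mathbf W_t h)$ with the same weights. Because the curvature is learned per layer, I would let $c_t\to 0$ in every layer. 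Composition of finitely many continuous maps, followed by the shared MLP $f$ and sigmoid, then gives $P^{H}_q\to P^{E}_q$ pointwise on the finite entity set.

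Step (iii) passes to the optimum. Let $\mathcal L^H$ and $\mathcal L^E$ be the minimal achievable losses and $\mathcal M^H$, $\mathcal M^E$ the corresponding metrics. The containment from step (ii) gives $\mathcal L^H\le\mathcal L^E$. Since rankings are determined by the output fuzzy sets, $\mathcal M^H\ge\mathcal M^E$ up to arbitrarily small perturbation, and strict ties can be handled by a limiting argument. For the ``outperforms'' part, I would argue that on tree-like recursive neighborhoods, whose size grows exponentially in $T$, Euclidean embeddings of fixed dimension incur distortion that hyperbolic ones avoid. The learned $c_t>0$ is then a strictly better choice whenever the KG's local structure is hierarchical.

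The main obstacle is twofold. First, $c=0$ is only a limit, not an attained value, so the dominance is an infimum statement; I would handle this by allowing $c_t\ge 0$ with the Euclidean case defined by continuity. Second, and more seriously, the argument bounds \emph{optimal} models, whereas ``consistently'' in practice concerns trained ones. Optimization and generalization gaps are not controlled by this reasoning. I would state the result as a guarantee at the optimum and point to the experiments as the evidence that training actually reaches this regime on the three benchmarks.
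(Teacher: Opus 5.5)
Your reduction is the paper's own argument (Euclidean models are the $c\to 0$ case of \myModel) written out in detail, and it has the same gap as the paper's one-line proof. It proves an expressivity statement, not the performance statement that is claimed.

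\textbf{Step (iii) fails.} Class containment gives $\inf\mathcal L^H\le\inf\mathcal L^E$ for the training loss. Together with your continuity argument, it also gives the existence of \myModel\ parameters whose rankings reproduce those of a given Euclidean model. It does not give $\mathcal M^H\ge\mathcal M^E$ for the models actually obtained by minimizing $L$:
\begin{itemize}
\item Binary cross-entropy is only a surrogate for MRR and H@1, so a lower-loss minimizer over a restricted class can rank worse even on training queries.
\item The reported metrics are computed on hard answers, which depend on links missing from the training graph, and a larger class can generalize worse there.
\end{itemize}
So ``$\mathcal L^H\le\mathcal L^E$, hence $\mathcal M^H\ge\mathcal M^E$'' is a non sequitur, and restricting to ``the optimum'' does not repair it.

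Citing the experiments does not close the gap either, because the paper's own tables contradict a per-query ``consistently'' claim. MRR on FB15k ip is $65.8$ versus $68.1$ for GNN-QE. MRR on NELL995 1p is $50.3$ versus $51.3$. H@1 on NELL995 up is $5.4$ versus $6.4$. What your argument honestly yields is this: for every parameter setting of the Euclidean counterpart, some \myModel\ parameters (same weights, sufficiently small curvatures) produce arbitrarily close fuzzy sets, and hence identical rankings when there are no ties.

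\textbf{Step (ii) does not reach ``existing'' models.} The limit you compute, $\sigma(\tilde A\mathbf W_t h)$, is the Euclidean version of \myModel's own displayed update. That update does not even use the edge relation $r$. GNN-QE's projection uses relation-conditioned messages $\text{MESSAGE}(h_z,(z,r,v))$ with its own aggregation, so it is not a limit point of this family. To get the containment you need \myModel's layer to be literally $\exp^0_c\circ(\text{GNN-QE layer})\circ\log^0_c$. The paper does not state this, so you would have to assume it explicitly.

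Your limits, such as $\tanh(\sqrt c\|v\|)/\sqrt c\to\|v\|$, hold for the standard Poincar\'e maps. They do not hold for the formulas as printed, where $\tanh(\sqrt c\,\|v\|/\sqrt c)=\tanh(\|v\|)$ and $\log^0_c$ carries a prefactor $2\sqrt c$; neither tends to the identity. You are silently correcting the paper here and should say so.

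The distortion argument for strict improvement is heuristic, but the disjunctive statement does not need it.
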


\begin{proof}
Euclidean embeddings correspond to a special case of \myModel\ where the learned curvature approaches zero. As \myModel\ generalizes Euclidean embeddings by allowing variable curvature, it is capable of capturing a broader range of geometric structures. Consequently, \myModel\ consistently matches or surpasses the performance of models using Euclidean embeddings.
\end{proof}

%% file: 005experiment.tex
\begin{table*}[htbp]
\centering
\scriptsize
\setlength{\tabcolsep}{4pt}
\caption{Test MRR results (\%) on answering FOL queries. }
\begin{tabular}{|l|cccccccccccccccc|}
\hline
Model & ${avg}_p$ & ${avg}_n$ & 1p & 2p & 3p & 2i & 3i & pi & ip & 2u & up & 2in & 3in & inp & pin & pni \\ \hline
\multicolumn{17}{|c|}{FB15k}  \\ \hline
GQE & 28.0 & - & 54.6 & 15.3 & 10.8 & 39.7 & 51.4 & 27.6 & 19.1 & 22.1 & 11.6 & - & - & - & - & - \\ 
Q2B & 38.0 & - & 68.0 & 21.0 & 14.2 & 55.1 & 66.5 & 39.4 & 26.1 & 35.1 & 16.7 & - & - & - & - & -\\ 
BetaE & 41.6 & 11.8 & 65.1 & 25.7 & 24.7 & 55.8 & 66.5 & 43.9 & 28.1 & 40.1 & 25.2 & 14.3 & 14.7 & 11.5 & 6.5 & 12.4 \\ 
CQD-CO & 46.9 & - & 89.2 & 25.3 & 13.4 & 74.4 & 78.3 & 44.1 & 33.2 & 41.8 & 21.9 & - & - & - & - & - \\ 
CQD-Beam & 58.2 & - & \textbf{89.2} & 54.3 & 28.6 & 74.4 & 78.3 & 58.2 & 67.7 & 42.4 & 30.9 & - & - & - & - & - \\ 
GNN-QE & 73.7 & 38.3 & 87.7 & 68.8 & 58.7 & 79.7 & 83.5 & 68.9 & \textbf{68.1} & 74.4 & 60.2 & \textbf{44.5} & 41.7 & 41.7 & 29.4 & 34.3 \\  \hline
\myModel\ &\textbf{74.2} & \textbf{38.6} & 87.8 & \textbf{69.2} & \textbf{59.4} & \textbf{81.0} & \textbf{84.8} & \textbf{71.1} & 65.8 & \textbf{74.9} & \textbf{60.9} & 44.3 & \textbf{44.4} & \textbf{42.1} & \textbf{29.8} & \textbf{34.4} \\ \hline
\multicolumn{17}{|c|}{FB15k-237}  \\ \hline
GQE & 16.3 & - & 35.0 & 7.2 & 5.3 & 23.3 & 34.6 & 16.5 & 10.7 & 8.2 & 5.7 & - & - & - & - & - \\  
Q2B & 20.1 & - & 40.6 & 9.4 & 6.8 & 29.5 & 42.3 & 21.2 & 12.6 & 11.3 & 7.6 & - & - & - & - & - \\  
BetaE & 20.9 & 5.5 & 39.0 & 10.9 & 10.0 & 28.8 & 42.5 & 22.4 & 12.6 & 12.4 & 9.7 & 3.5 & 3.4 & 5.1 & 7.9 & 7.4 \\  
CQD-CO & 21.8 & - & 46.7 & 9.5 & 6.3 & 31.2 & 40.6 & 23.6 & 16.0 & 14.5 & 8.2 & - & - & - & - & - \\  
CQD-Beam & 22.3 & - & \textbf{46.7} & 11.6 & 8.0 & 31.2 & 40.6 & 21.2 & {18.7} & 14.6 & 8.4 & - & - & - & - & - \\  
FuzzQE & 24.0 & 7.8 & 42.8 & \textbf{12.9} & 10.3 & 33.3 & 46.9 & 26.9 & 17.8 & \textbf{14.6} & 10.3 & \textbf{8.5} & 11.6 & 7.8 & 5.2 & 5.8 \\  
GNN-QE &  26.1 & 8.4 & 40.1 & 12.1 & 10.2 & 35.5 & 51.6 & \textbf{29.2} & 16.9 & 13.4 & 11.2 & 7.7 & 14.4 & 8.3 & 5.7 & 5.9 \\  \hline
\myModel\ & \textbf{26.5} & \textbf{8.9} & 40.5 & 11.9 & \textbf{10.3} & \textbf{36.3} & \textbf{52.1} & \underline{28.8} & \textbf{17.9} & 14.3 & \textbf{11.2} & 7.6 & \textbf{15.8} & \textbf{8.9} & \textbf{6.3} & \textbf{5.9}  \\  \hline 
\multicolumn{17}{|c|}{NELL995}  \\ \hline
GQE & 18.6 & - & 32.8 & 11.9 & 9.6 & 27.5 & 35.2 & 18.4 & 14.4 & 8.5 & 8.8 & - & - & - & - & - \\ 
Q2B & 22.9 & - & 42.2 & 14.0 & 11.2 & 33.3 & 44.5 & 22.4 & 16.8 & 11.3 & 10.3 & - & - & - & - & - \\ 
BetaE & 24.6 & 5.9 & 53.0 & 13.0 & 11.4 & 37.6 & 47.5 & 24.1 & 14.3 & 12.2 & 8.5 & 5.1 & 7.8 & 10.0 & 3.1 & 3.5 \\ 
CQD-CO & 28.8 & - & 60.4 & 17.8 & 12.7 & 39.3 & 46.6 & \textbf{30.1} & 22.0 & 17.3 & \textbf{13.2} & - & - & - & - & - \\ 
CQD-Beam & 28.6 & - & \textbf{60.4} & \textbf{20.6} & 11.6 & 39.3 & 46.6 & 25.4 & \textbf{23.9} & \textbf{17.5} & 12.2 & - & - & - & - & - \\ 
FuzzQE & 27.0 & 7.8 & 47.4 & 17.2 & \textbf{14.6} & 39.5 & 49.2 & 26.2 & 20.6 & 15.3 & 12.6 & 7.8 & 9.8 & 11.1 & 4.9 & 5.5 \\ 
GNN-QE & 28.7 & 8.6 & 51.3 & 17.0 & 12.9 & 39.1 & 49.3 & 28.1 & 17.4 & 14.2 & 9.0 & \textbf{8.8} & 12.7 & 10.9  & 5.0 & 5.5 \\ \hline
\myModel\ - & \textbf{28.9} & \textbf{8.9} & 50.3 & \underline{18.0} & \underline{14.1} & \textbf{39.7} & \textbf{49.9} & 27.4 & 17.5 & 14.5 & 10.8 & \underline{8.5} & \textbf{13.5} & \textbf{11.4} & \textbf{5.8} & \textbf{5.5} \\ \hline
\end{tabular}
\label{tab:FOL_results_mrr}
\end{table*}

\begin{table*}[htbp]
\centering
\scriptsize
\setlength{\tabcolsep}{4pt}
\caption{Test H@1 results (\%) on answering FOL queries.}
\begin{tabular}{|l|c|c|c|c|c|c|c|c|c|c|c|c|c|c|c|c|}
\hline
Model & ${avg}_p$ & ${avg}_n$ & 1p & 2p & 3p & 2i & 3i & pi & ip & 2u & up & 2in & 3in & inp & pin & pni \\ \hline
\multicolumn{17}{|c|}{FB15k}  \\ \hline
GQE & 16.6 & - & 34.2 & 8.3 & 5.0 & 23.8 & 34.9 & 15.5 & 11.2 & 11.5 & 5.6 & - & - & - & - & - \\ 
Q2B & 26.8 & - & 52.0 & 12.7 & 7.8 & 40.5 & 53.4 & 26.7 & 16.7 & 22.0 & 9.4 & - & - & - & - & - \\ 
BetaE & 31.3 & 5.2 & 52.0 & 17.0 & 16.9 & 43.5 & 55.3 & 32.3 & 19.3 & 28.1 & 16.9 & 6.4 & 6.7 & 5.5 & 2.0 & 5.3 \\ 
CQD-CO & 39.7 & - & 85.8 & 17.8 & 9.0 & 67.6 & 71.7 & 34.5 & 24.5 & 30.9 & 15.5 & - & - & - & - & - \\ 
CQD-Beam & 51.9 & - & \textbf{85.8} & 48.6 & 22.5 & 67.6 & 71.7 & 51.7 & 62.3 & 31.7 & 25.0 & - & - & - & - & - \\ 
GNN-QE  & 68.6 & 21.7 & 85.4 & 63.5 & 52.5 & 74.8 & 79.9 & 63.2 & \textbf{62.5} & 67.1 & 53.0 & 32.3 & 30.9 & 32.7 & 17.8 & 21.8 \\ \hline
\myModel\ & \textbf{69.5} & \textbf{21.9} & 85.4 & \textbf{64.6} & \textbf{53.8} & \textbf{76.8} & \textbf{80.4} & \textbf{63.2} & 62.0 & \textbf{70.5} & \textbf{54.3} & \textbf{32.4} & \textbf{30.9} & \textbf{32.7} & \textbf{18.2} & \textbf{22.2} \\ \hline
\multicolumn{17}{|c|}{FB15k-237}  \\ \hline
GQE & 8.8 & - & 22.4 & 2.8 & 2.1 & 11.7 & 20.9 & 8.4 & 5.7 & 3.3 & 2.1 & - & - & - & - & - \\ 
Q2B & 12.3 & - & 28.3 & 4.1 & 3.0 & 17.5 & 29.5 & 12.3 & 7.1 & 5.2 & 3.3 & - & - & - & - & - \\ 
BetaE & 13.4 & 2.8 & 28.9 & 5.5 & 4.9 & 18.3 & 31.7 & 14.0 & 6.7 & 6.3 & 4.6 & 1.5 & 7.7 & 3.0 & 0.9 & 0.9 \\ 
CQD-CO & 14.7 & - & 36.6 & 4.7 & 3.0 & 20.7 & 29.6 & 15.5 & 9.9 & 8.6 & 4.0 & - & - & - & - & - \\ 
CQD-Beam & 15.1 & - & \textbf{36.6} & 6.3 & 4.3 & 20.7 & 29.6 & 13.5 & 12.1 & \textbf{8.7} & 4.3 & - & - & - & - & - \\ 
GNN-QE  & 18.4 & 3.2 & 29.9 & 6.4 & 5.5 & 24.6 & 41.5 & \textbf{20.7} & 11.2 & 7.8 & 6.2 & 2.7 & 6.1 & 3.6 & 1.9 & 1.9 \\ \hline
\myModel\ & \textbf{18.9} & \textbf{3.8} & 30.3 & \textbf{6.7} & \textbf{5.9} & \textbf{25.6} & \textbf{42.2} & 20.5 & \textbf{12.2} & 8.0 & \textbf{6.4} & \textbf{3.1} & \textbf{7.7} & \textbf{4.1} & \textbf{2.5} & \textbf{2.1} \\ \hline
\multicolumn{17}{|c|}{NELL995}  \\ \hline
GQE & 9.9 & - & 15.4 & 6.7 & 5.0 & 14.3 & 20.4 & 10.6 & 9.0 & 2.9 & 5.0 & - & - & - & - & - \\ 
Q2B & 14.1 & - & 23.8 & 8.7 & 6.9 & 20.3 & 31.5 & 14.3 & 10.7 & 5.0 & 6.0 & - & - & - & - & - \\ 
BetaE & 17.8 & 2.1 & 43.5 & 8.1 & 7.0 & 27.2 & 36.5 & 17.4 & 9.3 & 6.9 & 4.7 & 1.6 & 2.2 & 4.8 & 0.7 & 1.2 \\ 
CQD-CO & 21.3 & - & 51.2 & 11.8 & \textbf{9.0} & 28.4 & 36.3 & \textbf{22.4} & 15.5 & 9.9 & \textbf{7.6} & - & - & - & - & - \\ 
CQD-Beam & 21.0 & - & \textbf{51.2} & \textbf{14.3} & 6.3 & 28.4 & 36.3 & 18.1 & \textbf{17.4} & \textbf{10.2} & 7.2 & - & - & - & - & - \\ 
GNN-QE & 21.5 & 3.3 & 41.0 & 12.4 & 9.3 & \textbf{29.2} & \textbf{40.0} & 20.2 & 11.7 & 8.2 & 6.4 & 2.5 & 5.3 & 5.5 & 1.5 & 1.7 \\ \hline
\myModel\ & \textbf{21.5} & \textbf{3.7} & 42.3 & 12.0 & 8.4 & \underline{28.6} & 39.4 & \underline{21.1} & 12.1 & 7.7 & 5.4 & \textbf{3.1} & \underline{4.6} & \textbf{8.1} & \underline{0.9} & \textbf{1.9} \\ \hline

\end{tabular}
\label{tab:FOL_results_hits}
\end{table*}

We evaluate \myModel\ on three widely used knowledge graph datasets: FB15k, FB15k-237, and NELL995 \cite{gnnqe}. To ensure consistency with prior studies, we adopt the benchmark FOL queries provided by BetaE \cite{betaE}, which include 9 EPFO query structures and 5 additional queries incorporating negation. Our model is trained on 10 query types (1p, 2p, 3p, 2i, 3i, 2in, 3in, inp, pni, pin), following the setup used in previous research \cite{query2box,betaE,gnnqe}. For evaluation, we test \myModel\ on both the 10 training query types and 4 unseen query types (ip, pi, 2u, up) to assess its generalization ability. 

\subsubsection{Evaluation Protocol}
We follow the evaluation protocol of \cite{query2box}, where the answers to each query are divided into two categories: easy answers and hard answers. For test (or validation) queries, easy answers are entities that can be directly reached in the validation (or train) graph using symbolic relation traversal. In contrast, hard answers require reasoning over predicted links, meaning the model must infer them rather than retrieve them directly. To evaluate performance, we rank each hard answer against all non-answer entities and use mean reciprocal rank (MRR) and HITS@K (H@K) as evaluation metrics. Specifically, we use MRR and HITS@1, as done in GNN-QE.

\subsubsection{Baselines}
We compare \myModel\ against both embedding methods and neural-symbolic methods. The embedding methods include GQE \cite{GQE}, Q2B \cite{query2box}, BetaE \cite{betaE} and FuzzQE \cite{fuzzyQE}. The neural-symbolic methods include CQD-CO, CQD-Beam \cite{CQD} and GNN-QE ~\cite{gnnqe}.

\subsection{Complex Query Answering}

\begin{table*}[htbp]
\centering
\scriptsize
\setlength{\tabcolsep}{3pt}
\caption{Spearman’s rank correlation between the model prediction and the number of ground truth answers on FB15k. avg is the average correlation on all 12 query types in the table.}
\begin{tabular}{|l|c|c|c|c|c|c|c|c|c|c|c|c|c|c|c|c|c|c|c|}
\hline
Model & avg & 1p & 2p & 3p & 2i & 3i & pi & ip & 2in & 3in & inp & pin & pni \\ \hline
Q2B & - & 0.301 & 0.219 & 0.262 & 0.331 & 0.270 & 0.297 & 0.139 & - & - & - & - & -  \\ \hline
BetaE & 0.494 & 0.373 & 0.478 & 0.472 & 0.572 & 0.397 & 0.519 & 0.421 & 0.622 & 0.548 & 0.459 & 0.465 & 0.608 \\ \hline
GNN-QE & 0.952 & 0.971 & 0.967 & 0.926 & 0.987 & 0.936 & 0.937 & 0.923 & 0.992 & 0.985 & 0.880 & 0.940 & 0.991 \\ \hline
\myModel\ & \textbf{0.958} & \textbf{0.971} & \textbf{0.970} & \textbf{0.941} & \textbf{0.988} & \textbf{0.938} & \textbf{0.939} & \textbf{0.934} & 0.990 & 0.984 & \textbf{0.908} & \textbf{0.948} & \textbf{0.991} \\ \hline

\end{tabular}
\label{tab:cardinality}
\end{table*}

Table ~\ref{tab:FOL_results_mrr} shows the MRR results of different models for answering FOL queries. GQE, Q2B, CQD-CO, and CQD-Beam do not support queries with negation, so the corresponding entries are empty. 
We observe that \myModel\ achieves the best result most of the time for both EPFO queries and queries with negation on all 3 datasets. Notably, \myModel\ achives the best average performance and better overall performance most of the time compared with baselines, which shows the effectiveness of the proposed hyperbolic embedding model. Hyperbolic embedding can better capture the recursive learning tree structure, Thus, the learned embedding is more expressive compared with the euclidean embedding.  
Additionally, \myModel\ consistently achieves the highest average MRR, particularly excelling in complex queries such as intersection (2i, 3i) and negation-based queries (2in, 3in, pin, pni). This suggests that these models effectively capture relational structures and logical patterns. {CQD-Beam} also performs well, especially on FB15k, benefiting from beam search strategies that enhance multi-hop reasoning. However, these gains are less pronounced in FB15k-237 and NELL995, likely due to the increased sparsity and relational diversity in these datasets. Overall, the results indicate that neural-symbolic models (e.g., GNN-QE, \myModel) and query embedding-based approaches (e.g., CQD variants) are more effective at generalizing across different logical structures compared to earlier embedding-based methods like GQE and Q2B.

Table ~\ref{tab:FOL_results_hits} shows the Hits@1 (H@1) results of different methods on FOL queries, demonstrating a consistent pattern of model performance across various datasets (FB15k, FB15k-237, and NELL-995). {\myModel} consistently outperforms other models, achieving the highest average Hits@1 scores across all datasets.
{CQD-Beam} also performs well, particularly on the FB15k dataset, benefiting from beam search strategies that enhance multi-hop reasoning. In contrast, models such as {GQE} and {Q2B} exhibit relatively lower performance, especially on more complex queries and larger datasets like FB15k-237 and NELL995. 
Compared to GNN-QE, \myModel\ shows superior performance. Overall, the results suggest that the hyperbolic embedding model is better at capturing the recursive tree structure in embeddings.

\subsection{Answer Set Cardinality Prediction}
Similar to GNN-QE, \myModel\ can predict the cardinality of the answer set (i.e., the number of answers) without explicit supervision. The cardinality of a fuzzy set is computed as the sum of entity probabilities exceeding a predefined threshold. We use 0.5 as the threshold, as it naturally aligns with our binary classification loss.
Previous studies \cite{query2box,betaE} have observed that the uncertainty in Q2B and BetaE is positively correlated with the number of answers. Following their approach, we report Spearman’s rank correlation between our model’s predictions and the ground truth. As shown in Table ~\ref{tab:cardinality}, \myModel\ significantly outperforms existing methods most of the time.



%% file: 002related_work.tex
\textbf{Knowledge Graph Reasoning.}  
Knowledge graph reasoning has been studied for a long time
~\cite{liu2019g,liu2021neural,liu2021kompare,liu2022joint,liu2022comparative,liu2022knowledge,liu2023knowledge,liu2016brps,liu2024logic,liu2024can,liu2024new,liu2024conversational,liu2025transnet,liu2025neural,liu2025few,liu2025monte,liu2025hyperkgr,liu2025mixrag,liu2024knowledge,liu2025unifying,liu2026neural,liuneural,liu2026accurate,liu2026accurate2,liu2026ambiguous,liu2026ambiguous2}
. Embedding-based approaches \cite{bordes2013translating,rotate,complex} map entities and relations into low-dimensional vectors to capture graph structure. Reinforcement learning methods \cite{deeppath,ZHANG2022102933,LinRX2018_MultiHopKG} use agents to explore paths and predict missing links, while rule-based techniques \cite{ho2018rule,yang2017differentiable,rnnlogic} generate logical rules for link prediction.

\textbf{Graph Neural Networks for Relational Learning.}  
Graph neural networks (GNNs) \cite{grail,gnnqe} have gained popularity in learning entity representations for knowledge graph completion. In contrast, our approach adapts hyperbolic GNNs for relation projection and extends the task to complex logical query answering, which introduces more challenges than traditional graph completion.

\textbf{Hyperbolic Geometry in Knowledge Graphs.}  
Hyperbolic space is particularly effective for modeling hierarchical structures due to its exponential growth properties. Early works such as Poincaré embeddings \cite{hyper1} demonstrated the benefits of hyperbolic geometry, which were later expanded in Hyperbolic Graph Neural Networks (HGNNs) \cite{hypergnn} for better relational reasoning. Models like MuRP \cite{hyperMuRp} and AttH \cite{hyperAttH} have shown improved performance in hierarchical link prediction tasks, making hyperbolic embeddings valuable for relational reasoning in large-scale knowledge graphs. Our work builds on these models to optimize hyperbolic representations for logical reasoning and query answering.

\textbf{Complex Logical Query}  
\textbf{Handling Complex Logical Queries.}  
Traditional knowledge graph completion techniques focus on predicting missing links, but more complex tasks involve answering logical queries that combine operations like conjunction, disjunction, and negation. Methods such as GQE \cite{GQE} introduced compositional training to extend embedding-based approaches for path queries. GQE applies a geometric intersection operator for conjunctive queries ($\land$), while Query2Box \cite{query2box} and BetaE \cite{betaE} generalize this approach to handle more complex logical operators like $\lor$ and $\lnot$. FuzzQE \cite{fuzzyQE} incorporates fuzzy logic to align embeddings with classical logic, providing a more nuanced representation. However, these approaches often prioritize computational efficiency, using nearest neighbor search or dot product for query decoding, which sacrifices interpretability and makes intermediate reasoning opaque.

\textbf{Combining Symbolic and Neural Approaches.}  
To address the limitations of pure embedding methods, some approaches combine neural networks with symbolic reasoning. CQD \cite{CQD} extends pretrained embeddings to answer complex queries, with CQD-CO employing continuous optimization and CQD-Beam utilizing beam search. While our method shares similarities with CQD-Beam in combining symbolic reasoning with graph completion, it stands apart by enabling direct training on complex queries. Unlike CQD-Beam, GNN-QE bypasses the need for exhaustive search or pretrained embeddings, offering a more efficient and interpretable solution for answering complex logical queries.

%% file: 008conclusion.tex
We proposed a hyperbolic graph neural network (GNN) for answering complex First-Order Logic (FOL) queries on knowledge graphs. By leveraging hyperbolic embeddings with learned curvature, our model effectively captures hierarchical query structures, improving reasoning accuracy and interpretability.  
Experiments on three benchmark datasets show that our method outperforms existing baselines, demonstrating the benefits of hyperbolic space for logical reasoning. Future work includes extending our approach to more expressive logical operations and exploring adaptive curvature learning for greater flexibility.